\documentclass[10pt,twocolumn,letterpaper]{article}

\usepackage{3dv}
\usepackage{times}
\usepackage{epsfig}
\usepackage{graphicx}
\usepackage{amsmath}
\usepackage{amssymb}

\usepackage[pagebackref=true,breaklinks=true,colorlinks,bookmarks=false]{hyperref}

\usepackage{times}
\usepackage{epsfig}
\usepackage{graphicx}
\usepackage{amsmath}
\usepackage{amssymb}
\usepackage{amsthm}
\usepackage{algorithm} 
\usepackage{algpseudocode} 

\usepackage{array}  
\usepackage{caption}
\usepackage{subcaption}
\usepackage{calc}
\usepackage{verbatim} 
\usepackage{xcolor}
\usepackage{booktabs}

\usepackage{animate}

\usepackage{enumitem}
\setlist{topsep=0pt, leftmargin=*,noitemsep,topsep=0pt,parsep=0pt,partopsep=0pt}

\newtheorem{lemma}{Lemma}
\newtheorem{cond}{Condition}

\newcommand{\change}[1]{{#1}}

\renewcommand{\paragraph}[1]{\textbf{#1}}

\threedvfinalcopy %

\usepackage{nopageno}
\begin{document}

\title{Shortest Paths in Graphs with Matrix-Valued Edges: \\
Concepts, Algorithm and Application to 3D Multi-Shape Analysis}

\author{Viktoria Ehm\\
TU Munich\\
\and
Daniel Cremers\\
TU Munich\\
\and
Florian Bernard\\
University of Bonn, TU Munich
}

\maketitle

\begin{abstract}
Finding shortest paths in a graph is relevant for numerous problems in computer vision and graphics, including image segmentation, shape matching, or the computation of geodesic distances on discrete surfaces. Traditionally, the concept of a shortest path is considered for graphs with scalar edge weights, which makes it possible to compute the length of a path by adding up the individual edge weights. Yet, graphs with scalar edge weights are severely limited in their expressivity, since oftentimes edges are used to encode significantly more complex interrelations. In this work we compensate for this modelling limitation and introduce the novel graph-theoretic concept of a shortest path in a graph with matrix-valued edges. To this end, we define a meaningful way for quantifying the path length for matrix-valued edges, and we propose a  simple yet effective algorithm to compute the respective shortest path. While our formalism is universal and thus applicable to a wide range of settings in vision, graphics and beyond, we focus on demonstrating its merits in the context of 3D multi-shape analysis.

\end{abstract}

\begin{figure}[htb]
\includegraphics[width=1\linewidth]{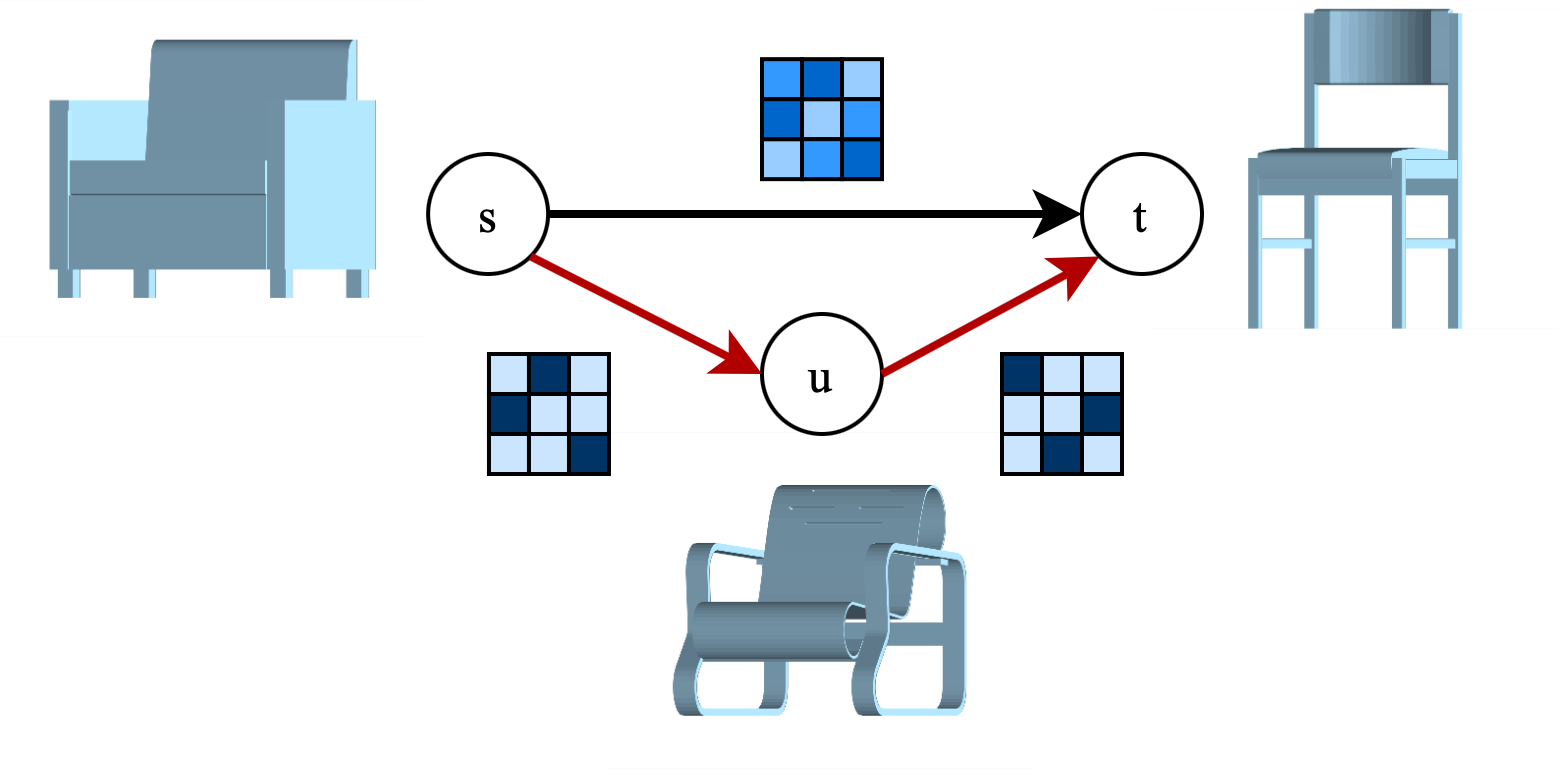}%
\vspace{-4mm}
\caption{\textbf{Shortest path problem in a graph with matrix-valued edges.} In this probabilistic multi-matching graph the nodes represent 3D shapes, and edges encode probabilistic correspondences between the shapes. Shapes $s$ and $t$ are more dissimilar compared to $s$ and $u$ (or  $u$ and $t$), so that the probabilistic correspondence of the path $(s,t)$ is more `fuzzy' compared to the composed path  $(s,u,t)$. 
Considering the `fuzzyness' of a path as its length, 
the composed path $(s,u,t)$ (red) is the shortest path between $s$ and $t$.
}
\label{fig:teaser}
\end{figure}

\section{Introduction}
Graphs  are a fundamental data structure for representing relations between entities. A graph comprises of nodes and edges, where the nodes represent the entities, and edges connect nodes in order to represent their relations. By associating a scalar weight to each edge, one can compute the length of a path along a sequence of connected nodes by summing up individual edge weights. The widely-used graph-theoretic problem 
of finding shortest paths in a graph has a high relevance for numerous tasks in
computer vision and graphics, 
including image segmentation~\cite{schoenemann2009combinatorial}, shape matching~\cite{laehner2016efficient}, computing similarities between shapes \cite{hilaga2001topology}, or the computation of geodesic distances on surfaces~\cite{mitchell1987discrete,kimmel1998computing}, among many others. While these are fundamentally different tasks, the respective shortest path formulations have in common that the weights associated to each edge in the graph are scalar-valued -- yet, many problems related to computer vision and graphics rely on graphs with more complex relations between nodes, such as synchronisation methods~\cite{arrigoni2020synchronization, pachauri2013solving}
, which are for example relevant for multi-shape analysis. In synchronisation problems, the nodes of the graph represent objects (\eg images, or 3D shapes), and the edges represent relative transformations between pairs of objects, such as permutation matrices for correspondence problems~\cite{bernard2019synchronisation},
or rigid-body transformations for alignment problems~\cite{bernard2015solution,arrigoni2016spectral}.

Although the example of synchronisation methods illustrates that modelling more complex relations is highly relevant for processing and analysing visual data, we believe that we currently lack the algorithmic machinery in order to fully exploit the potentials of such complex data. As a first step towards addressing this issue, in this work \emph{we conceptualise the shortest path problem for graphs with matrix-valued edges}. To this end, we address the fundamental question of how to -- in a meaningful way -- define the cost of a path comprising of matrix-valued edges, as well as how to find the shortest path between two nodes.
Experimentally, we demonstrate that such a formulation offers novel potentials in the context of 3D multi-shape analysis.
Overall, our main contributions are:
\begin{itemize}
    \item For the first time we  conceptualise the shortest path problem for graphs with matrix-valued edges.
    \item For computationally finding shortest paths we introduce an exact search algorithm, which is based on a simple pruning strategy that guarantees to find the globally optimal shortest path.
    \item As a proof of concept, we show that our novel formalism is beneficial in the context of 3D multi-shape analysis, \eg for finding intermediate shapes between pairs of shapes in a shape collection, to define a correspondence-free shape metric, or for shape morphing.
\end{itemize}

\section{Related Work}
In this section we summarise works that we consider most relevant to our approach.

\paragraph{Graphs in visual computing.}
Graphs are omnipresent in computer vision and graphics as they appear in an enormous variety of practical problems. For instance, Kurillo \etal \cite{kurillo2008wide} use graphs for multi-camera calibration, where the cameras are connected by edges whose weight represent the number of common points observed by pairs of cameras. 
In multiple object tracking,  graph nodes are used to represent possible spatial locations of objects in different frames, which are connected by edges that represent possible spatial trajectories~\cite{jiang2007linear}. 
For geometric analysis, Sundar \etal \cite{sundar2003skeleton} use skeleton graphs to compare 3D meshes using graph matching techniques.
 Wang \etal \cite{wang2018pixel2mesh} propose a method based on graph neural networks for constructing 3D meshes from single RGB images. %
 Overall, the diversity of these examples highlight the importance of graphs for addressing computational problems in visual computing.

\paragraph{Shortest path algorithms.}
Finding shortest paths in graphs is a popular graph-theoretic concept that is ubiquitous in many subdisciplines of computer science. Most commonly, a scalar weight is associated to each edge in the graph, so that the length of a path is computed by summing up the individual edge weights.
The famous Dijkstra algorithm \cite{dijkstra1959note} provides an efficient way to find shortest paths in graphs with positive edge weights.
The Bellman-Ford algorithm \cite{bellman1958routing,ford1956network} and the Floyd-Warshall algorithm \cite{floyd1962algorithm} also allow for negative edge weights under the assumption that there are no cycles with negative weight.
The A*-algorithm can improve the computational time in practice  based on a heuristic function~\cite{hart1968formal}. 

An alternative to summing up individual edge weights is the multiplication of edge weights. This is for example relevant for the `most reliable path' problem, where the edge weights represent failure probabilities, and the purpose is to find the path between two nodes that has the lowest probability of failure.
Roosta \etal \cite{roosta1982routing} and Pan \etal \cite{pan2008minimizing} address this problem by transforming the edge weights logarithmically, so that multiplications become summations, and the problem amounts to an ordinary shortest path problems with additive path costs. Petrovic \etal \cite{petrovic1979two} present  algorithms
where multiplications of the edge weights are used to determine the most reliable path. Similarly, Jain and Gopal \cite{jain1986algorithm} present an algorithm based on node removal techniques in order to reduce the computational effort compared to techniques based on Dijkstra or Floyd. While the mentioned works take into account multiplicative path lengths, they are only considered for the case of scalar edge weights. In contrast, in our work we consider matrix-valued edge weights, where we compute path lengths based on the composition of matrices (e.g.~via matrix multiplication).

\paragraph{Shortest paths in visual computing.}
The concept of shortest paths also plays an important role in computer vision and graphics. For example, 
exact \cite{chen1990shortest,kaneva2000implementation,mitchell1987discrete,kimmel1998computing} and approximate \cite{aleksandrov2005determining,lanthier1997approximating} approaches have been considered to determine the shortest path over a polyhedral surface.
Hilaga \etal \cite{hilaga2001topology} search for similarities between shapes by computing shortest paths to determine geodesic distances on the surface of every 3D shape.

Shortest paths have also been utilised for model-based 2D image segmentation ~\cite{schoenemann2009combinatorial}. Analogous approaches were proposed for 3D image segmentation, which, however, amounts to the significantly more difficult problem of finding a minimal surface 
\cite{grady2008minimal}.
Lähner \etal \cite{laehner2016efficient} use a shortest path formulation to determine a non-rigid 2D-to-3D shape matching, for which the product manifold between the 2D shape and the 3D shape forms the corresponding graph. %

Berclaz \etal \cite{berclaz2011multiple} use the shortest path algorithm for multiple object tracking, and Snavely \etal \cite{snavely2008skeletal} use a modified version of the Dijkstra  algorithm to determine structure from motion for large photo collections.  The latter appproach has led to  impressive results, like the digital rebuilding of Rome~\cite{agarwal2011building}.
Similar to our approach, Snavely \etal \cite{snavely2008skeletal} use matrices to describe similarities between nodes. However, to compute shortest paths they summarise each matrix into a scalar value by computing the matrix trace, so that eventually they arrive at shortest path problem with scalar edge weights.
While the discussed approaches emphasise that shortest paths play an important role in visual computing, existing works have the main limitation that they only consider scalar edge weights.

\paragraph{Graphs with matrix-valued edges and beyond.}
In order to represent complex interrelations between objects, oftentimes graphs with edges that go beyond simple scalar weights are utilised. One particular example are matrix-valued edges, which can for example represent relative spatial transformations, or correspondences between pairs of objects.
Such graphs with matrix-valued edges are common in synchronisation problems~\cite{arrigoni2020synchronization}, where nodes represent objects (images, cameras, shapes, etc.) and the edges represent relative transformations between them (rigid-body motion, correspondences, etc.). The purpose of synchronisation is to establish cycle consistency in the set of pairwise relative transformations.
One popular application of synchronisation is multi-view registration \cite{sharp2002multiview,martinec2007robust}, where edges model the rotation and translation between different views.  Thunberg \etal \cite{thunberg2017distributed} formulate the synchronisation problem over graphs with orthogonal matrices as a nonlinear least-squares problem. The algorithm was extended in \cite{thunberg2017distributedCdc} to also address translations. Similarly, Arrigoni \etal~\cite{arrigoni2016spectral} present a method for synchronising graphs with matrices in $SE(3)$, and Bernard \etal \cite{bernard2015solution} show a method to align objects that are connected by invertible linear transformations. \change{Synchronisation problems have also been utilised in various contexts for deep learning \cite{huang2019learning,huang2021multibodysync,gojcic2020learning, 
purkait2020neurora}.}
Lu \etal \cite{lu1997globally} model poses (nodes) and the relative measurement (edges) between them for posegraph optimisation. Similarly, Strasdat \etal \cite{strasdat2010scale} assign matrices 
to graph edges for the monocular SLAM problem.
Huang \etal \cite{huang2014functional} present a method to compute consistent functional maps in shape collections, where graph nodes represent functional vector spaces, and edges represent functional maps between them. 

Although there are numerous works that consider graphs with matrix-valued edges in a wide range of different contexts, to date the graph-theoretic concept of a shortest path has not yet been generalised towards matrix-valued edges.  The purpose of this work is to fill this gap by conceptualising shortest paths for graphs with matrix-valued edges.

\section{Concept of Matrix-Valued Shortest Paths}
\textbf{Notation.} We use upper-case letters for sets (calligraphic) and matrices (non-calligraphic), and lower-case letters for vectors and scalars. 

\textbf{Graphs with matrix-valued edges.} A graph is a tuple $\mathcal{G}=(\mathcal{V},\mathcal{E})$,
where $\mathcal{V}$ represents the nodes and $\mathcal{E} \subseteq \mathcal{V} \times \mathcal{V}$ the set of (directed) edges. 
To each edge $(u,v) \in \mathcal{E}$ between two nodes $u,v \in \mathcal{V}$ we associate a matrix $M_{uv} \in \mathcal{M}$, where %
we assume that the set $\mathcal{M}$  is closed under the (associative) composition operation $\circ: \mathcal{M} \times \mathcal{M} \rightarrow \mathcal{M}$ (e.g.~matrix multiplication or summation), and that it contains an identity element $I \in \mathcal{M}$ such that $M \circ I = M$  and $I \circ M = M$ for all $M \in \mathcal{M}$.
For all $u \in \mathcal{V}$, we set $M_{uu} = I$.

A path from a source node $s \in \mathcal{V}$ to a target node $t \in \mathcal{V}$ is a sequence of nodes that starts with $s$ and ends with $t$.
For the path $\pi = (s,u_1,\ldots,u_k,t)$ we define the \emph{composed edge matrix} $M_{\pi} = M_{(s,u_1,\ldots,u_k,t)} = M_{su_1} {\circ} M_{u_1u_2} {\circ} \ldots {\circ} M_{u_kt} \in \mathcal{M}$,~i.e.~we compose the matrices along the edges in the path via the operation $\circ$.

\textbf{Shortest path problem.} Let us consider the function $f: \mathcal{M} \rightarrow \mathbb{R}^+$ that maps a matrix to a non-negative real value.  Among all possible paths from node $s$ to node $t$, we denote the path $\pi^{\star}_{st}$ that gives rises to the lowest value of $f$ as \emph{shortest path}. We denote the composed matrices in the shortest path as $M_{\pi^{\star}_{st}}$, so that the corresponding cost is $f(M_{\pi^{\star}_{st}})$, which we write as $f(\pi^{\star}_{st})$ by abuse of notation.

\section{Algorithm}
\label{section:algorithm}

In the following we propose a search algorithm in order to find the shortest path between a source node $s \in \mathcal{V}$ and all other nodes $t \in \mathcal{V} \setminus{\{s\}}$. \change{
To this end, we introduce a generic framework that can handle arbitrary
cost functions.}  For the sake of simplicity we assume that the graph is complete,~i.e.~$\mathcal{E} = \mathcal{V} \times \mathcal{V}$.
To guarantee the correctness of our algorithm, we have to ensure that adding an edge to the current path can only increase the path cost (analogously to the Dijkstra algorithm, in which scalar-valued edges cannot have negative costs). Formally, we require that the function $f$ monotonically increases the path cost:
\begin{cond}[Monotonicity of path cost]\label{cond:mon}
The function $f$ monotonically increases the path cost in the sense that $f(M{\circ}X) \geq f(M)$
holds for all $M \in \mathcal{M},X \in \mathcal{M}$.
\end{cond}
In addition, we impose that $f(I) = 0$,~i.e.~the identity matrix  has zero path cost. Condition~\ref{cond:mon} gives rise to the following straightforward yet important result:

\begin{lemma}\label{lem:kIterations}
Consider $k \geq2$.
Let $\pi^{\star}_{sp}$ be the shortest path from $s$ to $p$ among all paths with exactly $k{-}1$ edges, and $\pi^{\star}_{st}$ the shortest path from  $s$ to $t$ with at most $k{-}1$ edges.  
If $f(\pi^{\star}_{st}) \leq f(\pi^{\star}_{sp})$, then any path $\pi$ that contains $p$ at the $k$-th position  cannot lead to a shorter path from $s$ to $t$ than $\pi^{\star}_{st}$.
\end{lemma}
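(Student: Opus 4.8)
The plan is to reduce the statement to a direct application of the monotonicity condition (Condition~\ref{cond:mon}) together with the optimality of $\pi^{\star}_{sp}$ as a prefix, mirroring the classical ``a subpath of a shortest path is itself shortest'' argument that underlies the correctness proof of Dijkstra's algorithm.

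First I would fix an arbitrary path $\pi$ from $s$ to $t$ that contains $p$ at its $k$-th position, and split it at $p$ into a prefix $\pi_1$ running from $s$ to $p$ and a suffix $\pi_2$ running from $p$ to $t$. Since $p$ sits at the $k$-th position, the prefix $\pi_1$ consists of exactly $k-1$ edges, hence it is one of the competitors over which $\pi^{\star}_{sp}$ is minimal. Using associativity of $\circ$ I would factor the composed edge matrix as $M_{\pi} = M_{\pi_1} \circ M_{\pi_2}$; if $p = t$ so that $\pi_2$ is empty, this still holds with $M_{\pi_2} = M_{tt} = I$.

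Next I would chain three inequalities. Monotonicity applied with $M = M_{\pi_1}$ and $X = M_{\pi_2}$ gives $f(M_{\pi}) = f(M_{\pi_1} \circ M_{\pi_2}) \geq f(M_{\pi_1})$. Optimality of $\pi^{\star}_{sp}$ among all $(k-1)$-edge paths from $s$ to $p$ gives $f(M_{\pi_1}) \geq f(\pi^{\star}_{sp})$. Finally the hypothesis $f(\pi^{\star}_{st}) \leq f(\pi^{\star}_{sp})$ closes the chain, yielding $f(\pi) \geq f(\pi^{\star}_{sp}) \geq f(\pi^{\star}_{st})$, which is exactly the claim that no such $\pi$ can beat $\pi^{\star}_{st}$.

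I do not expect a genuine obstacle here, since the argument is essentially a one-line consequence of Condition~\ref{cond:mon}. The only points requiring care are bookkeeping ones: confirming that ``$p$ at the $k$-th position'' translates into a prefix of exactly $k-1$ edges, that a prefix of a valid path is again a valid path whose composed matrix factors out via associativity, and handling the degenerate case $p = t$ (empty suffix) so that the monotonicity step still applies with $M_{\pi_2} = I$. Existence of the minimiser $\pi^{\star}_{sp}$ is guaranteed because a complete finite graph admits only finitely many $(k-1)$-edge paths from $s$ to $p$.
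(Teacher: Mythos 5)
Your argument is correct and is essentially identical to the paper's proof: both split $\pi$ at $p$ into a $(k{-}1)$-edge prefix, apply the optimality of $\pi^{\star}_{sp}$ over such prefixes, and invoke Condition~\ref{cond:mon} to bound $f(\pi)$ from below by the prefix cost. Your extra remarks on the degenerate case $p=t$ and on associativity are harmless bookkeeping that the paper leaves implicit.
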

\begin{proof}
Consider the subpath $\pi'$ of $\pi$ from $s$ to $p$. Since $\pi'$  comprises of $k{-}1$ edges, $f(\pi^{\star}_{st}) \leq f(\pi^{\star}_{sp}) \leq f(\pi')$, due to the optimality of $\pi^{\star}_{sp}$. Further, as  $\pi'$ is a subpath of $\pi$, from Condition~\ref{cond:mon} it follows that $f(\pi') \leq f(\pi)$, so that $f(\pi^{\star}_{st}) \leq f(\pi)$.
\end{proof}
This indicates that $f(\pi^{\star}_{sp})$ serves as lower bound for the cost of any path $\pi$ from $s$ to $t$ that has $p$ at the $k$-th position. 
Hence, %
we can prune  paths with $p$ at the $k$-th position from the search tree,
whenever their cost is larger than the shortest path from $s$ to $p$ with at most $k-1$ edges, as illustrated in Fig.~\ref{fig:sppruning}. To implement this, at a time we consider only paths comprising $k$ edges, for which we keep track of all candidates of intermediate nodes $p$ 
that can possibly lead to a shorter path between $s$ and $t$. Those candidates that can appear at the $k$-th position in the path from $s$ to $t$ are denoted by $T_{k}(t)$.
During our search, when considering paths from $s$ to $t$ comprising of exactly $k$ edges, we only consider paths of the form $\mathcal{A}(t) = \{ (s,u_2,...,u_{k},t) ~:~u_i \in \mathcal{T}_{i}(t), i = 2,...,k\}$.

\begin{figure}
    \centering
    \includegraphics[width=1\columnwidth]{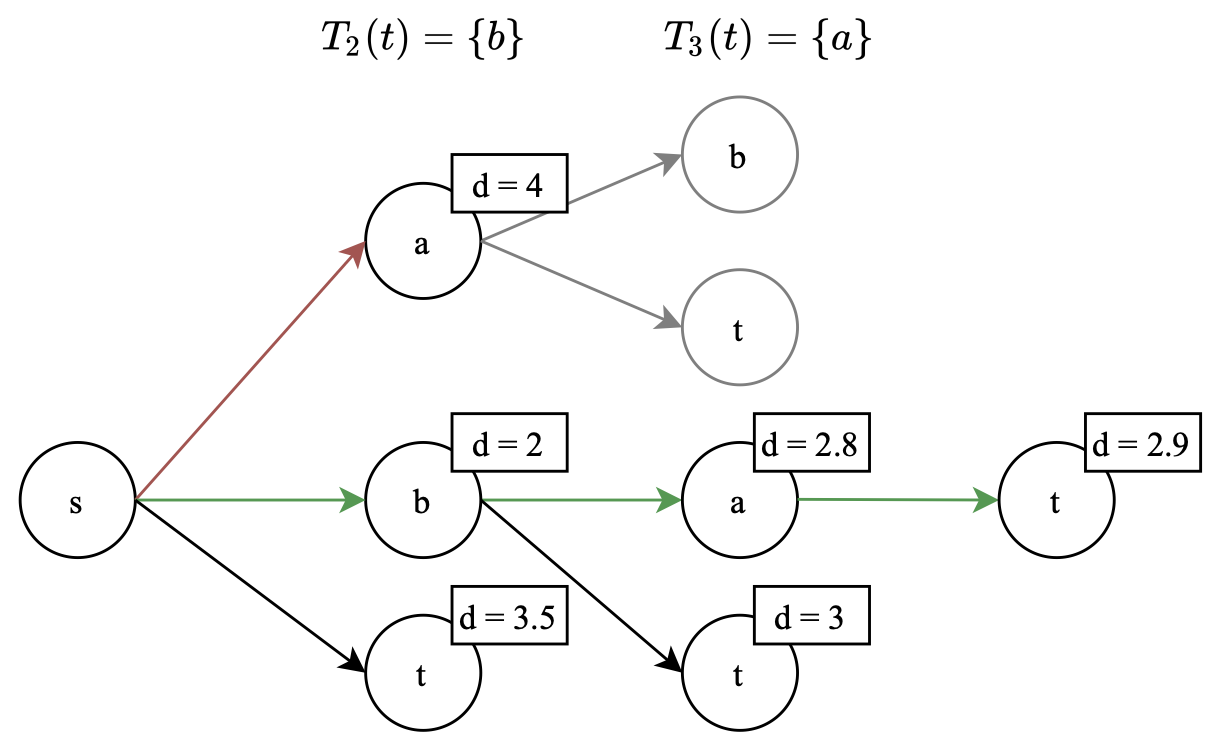}
    \vspace{-7mm}
    \caption{\textbf{Our algorithm prunes paths that cannot lead to a shorter paths.} 
    For finding a shortest path from $s$ to $t$, among the paths with  $k=2$ edges, only the node $b$ needs to be considered for the second position,~i.e.~$T_{2}(t) = \{b\}$.
   The direct path $\pi_{sa} = (s,a)$ is pruned since $f(\pi_{sa}) = 4$ is larger than the cost $f(\pi^{\star}_{st}) = 3.5$ of the shortest path from $s$ to $t$ with  $k<2$ edges. %
    For $k=3$, only the node $a$ needs to be considered for the third position,~i.e.~$T_{3}(t) = \{a\}$, because the path $\pi_{sba} = (s,b,a)$ has cost $f(\pi_{sba}) = 2.8$, which is smaller than the cost of the shortest path $f(\pi^{\star}_{sbt}) = 3$ from $s$ to $t$ with $k<3$ edges. %
    The globally optimal shortest path is marked in green.}
    \label{fig:sppruning}
\end{figure}
 Our algorithm is outlined in Alg.~\ref{alg:shortestpath}. It proceeds by
iterating over  $k=2,3\ldots$, where $k$ denotes the number of edges contained in a path.
For a given number of edges $k$, we determine all candidate paths that could potentially be a shortest path  between node $s$ and  target node $t$, and store them in the set $\mathcal{A}(t)$.
In case the set $\mathcal{A}(t)$ is empty for all $t$, we have evaluated all paths that are potential candidates for a shortest path, and we terminate.
If the cost $f(\pi)$ of a path  $\pi \in \mathcal{A}(t)$ from $s$ to $t$ is smaller than the cost $f(\pi^{\star}_{st})$ of the current shortest path $\pi^{\star}_{st}$ from $s$ to $t$,  $\pi^{\star}_{st}$ is updated. 

In the case of a complete graph, there are $\mathcal{O}(n!)$ candidates for the shortest path between source $s$ and a given target $t$, which need to be evaluated in the worst case.
Yet, due to Lemma~\ref{lem:kIterations}, we can reduce the average time complexity significantly in practice. 

\renewcommand{\algorithmicrequire}{\textbf{Input:}}
\renewcommand{\algorithmicensure}{\textbf{Output:}}

\algnewcommand{\Inputs}[3]{%
  \State \textbf{Input:}
  \State #1
  \State #2
  \State #3
}

\algnewcommand{\Initialise}[2]{%
  \State \textbf{Initialise:}
  \Statex \hspace*{\algorithmicindent}\parbox[t]{.8\linewidth}{\raggedright #1}
  \Statex \hspace*{\algorithmicindent}\parbox[t]{.8\linewidth}{\raggedright #2}
}

\begin{algorithm}
\begin{algorithmic}[1]
\caption{Matrix-valued shortest path algorithm
}\label{alg:shortestpath}
\State \textbf{Input:}
\State $\mathcal{V}$ (set of all nodes)
\State $s \in \mathcal{V}$ (source node)
\State $\{M_{uv}\}$ (edge matrices between $u$ and $v$)
\State \textbf{Initialise:}
\State  $\mathcal{T} = \mathcal{V} \setminus{\{s\}}$ (set of all target nodes)
\State $\pi^{\star}_{st} = (s,t)$
(direct path between $s$ and $t$)
\For {$k= 2...|\mathcal{V}|$}
\State $\mathcal{T}_{k}(t) = \{ p {\in} \mathcal{T} \setminus{\{t\}}: f({\pi^{\star}_{sp}}) {<} f({\pi^{\star}_{st}}) \}$  
\State $\mathcal{A}(t) = \{ (s,u_2,...,u_{k},t) :u_i \in \mathcal{T}_{i}(t), i = 2,...,k\}$

\If{$\mathcal{A}(t) = \emptyset$ for all $t \in \mathcal{T}$}
\Return
\EndIf

\For{ $t \in \mathcal{T}$ and $\pi \in \mathcal{A}(t)$}
\If {$f({\pi}) <  f({\pi^{\star}_{st}})$}
\State $\pi^{\star}_{st} = \pi$
\EndIf
\EndFor
\EndFor
\end{algorithmic}
\end{algorithm}

\section{Application to Multi-Shape Analysis}
\newcommand{\includeSMAL}[3]{\includegraphics[width= 1\linewidth] {figures/top_list_#1/#2_#3.png}}

\newcommand{\includeSMALQuery}[3]{\includegraphics[width= 0.12\linewidth] {figures/top_list_#1/#2_#3.png}}

\newcommand{\includeSMALRow}[5]{
\includeSMAL{#3}{#1}{#2_2} & \includeSMAL{#4}{#1}{#2_3} & \includeSMAL{#5}{#1}{#2_4}
}

\begin{figure*}[h]

 \centering
\begin{tabular}{m{0.05\linewidth}|  m{0.12\linewidth}  m{0.12\linewidth}  m{0.12\linewidth} | m{0.12\linewidth}  m{0.12\linewidth}  m{0.12\linewidth} } 
\toprule

& \multicolumn{3}{c|}{\includeSMALQuery{ini}{sp}{3_1}} &  \multicolumn{3}{c}{\includeSMALQuery{ini}{sp}{5_1}}\\\cmidrule(r){2-4}\cmidrule(l){5-7}

ICP & \includeSMALRow{icp}{3}{gray}{gray}{gray} & 
\includeSMALRow{icp}{5}{green}{green}{green}\\

Eucl & \includeSMALRow{eucl}{3}{gray}{gray}{gray} & 
\includeSMALRow{eucl}{5}{gray}{green}{green}\\

Our & \includeSMALRow{sp}{3}{green}{green}{green} & 
\includeSMALRow{sp}{5}{green}{green}{green}\\

\bottomrule
\end{tabular}

\begin{tabular}
{m{0.05\linewidth}|  m{0.12\linewidth}  m{0.12\linewidth}  m{0.12\linewidth} | m{0.12\linewidth}  m{0.12\linewidth}  m{0.12\linewidth} } 
\toprule
& \multicolumn{3}{c|}{\includeSMALQuery{ini}{sp}{1_1}} &  \multicolumn{3}{c}{\includeSMALQuery{ini}{sp}{2_1}}\\\cmidrule(r){2-4}\cmidrule(l){5-7}

ICP & \includeSMALRow{icp}{1}{gray}{gray}{gray} & 
\includeSMALRow{icp}{2}{gray}{gray}{gray}\\

Eucl & \includeSMALRow{eucl}{1}{gray}{green}{gray} & 
\includeSMALRow{eucl}{2}{green}{gray}{gray}\\

Our & \includeSMALRow{sp}{1}{green}{gray}{gray} & 
\includeSMALRow{sp}{2}{gray}{green}{green} \\
\bottomrule
\end{tabular}
\vspace{-3mm}
\caption{\textbf{Closest shapes of SMAL toys  \cite{Zuffi:CVPR:2017}.} The three closest shapes (left to right) of the four query shapes (gold) are shown. Shapes that belong to the same family as the query shape are shown in green. Our algorithm  finds the most shapes from the same family. Note that each shape is scaled individually for visualisation purposes.}
\label{shapeRetrievalQualitativ}
\end{figure*}
We demonstrate the merits of our matrix-valued shortest path formalism \change{and its universal applicability to a diverse range of} tasks in multi-shape analysis. To this end, as a proof of concept we consider its application for the definition of a correspondence-free shape metric, for finding intermediate shapes between a given pair of shapes, and for shape morphing. 

For the rest of the paper, we consider the specific choice $\mathcal{M} = \{ X \in [0,1]^{n \times n}~:~X \mathbf{1} = \mathbf{1}, X^T \mathbf{1} = \mathbf{1}\}$,~i.e.~the set of doubly-stochastic matrices, and we use matrix multiplication as composition operation $\circ$. Since the set of doubly-stochastic matrices forms a semigroup with matrix multiplication as group operation~\cite{farahat1966semigroup}, $\mathcal{M}$ is closed under matrix multiplication, and contains the identity matrix $I$ as identity element.
The doubly-stochastic matrix $M_{xy}$ connecting node $x$ to $y$ can be seen as a probabilistic correspondence between $n$ parts on $x$ and $n$ parts on $y$. Hence, we define all edge matrices in a symmetric manner, so that $M_{yx} = M_{xy}^T$.
In order to quantify the cost of a path, we consider the `fuzzyness' of the composed probabilistic correspondence along a path. To this end, we set $f$ to the total entropy $H: [0,1]^{n \times n} \rightarrow  \mathbb{R}^+$,~i.e. %
\begin{equation}
H(X) = -\sum_i \sum_j X_{ij} {\cdot} \log(X_{ij}),
\end{equation}
\change{which is commonly used for this purpose~\cite{souiai2015entropy}}. We use the common convention that $0 {\cdot}\log(0) = 0$.

The following confirms that the function $H$ monotonically increases the path cost (as required in Condition~\ref{cond:mon}):
\begin{lemma}
For $M,X$ being doubly-stochastic matrices it holds that  %
$H(MX) \geq H(M)$.
\end{lemma}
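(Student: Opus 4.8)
The plan is to reduce the matrix entropy to a sum of scalar contributions and then exploit concavity. Writing $Y = MX$ with entries $Y_{ij} = \sum_k M_{ik} X_{kj}$, I introduce the scalar function $\phi(t) = -t \log t$, so that $H(Y) = \sum_{i}\sum_{j} \phi(Y_{ij})$ and likewise $H(M) = \sum_{i}\sum_{k} \phi(M_{ik})$. The function $\phi$ is concave on $[0,1]$ (since $\phi''(t) = -1/t \leq 0$), and it extends continuously with $\phi(0) = 0$, which is exactly the convention $0{\cdot}\log(0)=0$ used in the paper. Concavity of $\phi$ is the single analytic ingredient the whole argument rests on.

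Next I would use the double stochasticity of $X$ twice, in two distinct roles. First, because the \emph{columns} of $X$ sum to one, i.e. $\sum_k X_{kj} = 1$ with $X_{kj} \geq 0$, each entry $Y_{ij} = \sum_k X_{kj}\, M_{ik}$ is a genuine convex combination of the entries $M_{ik}$ appearing in the $i$-th row of $M$. Applying Jensen's inequality to the concave $\phi$ then yields the pointwise bound $\phi(Y_{ij}) \geq \sum_k X_{kj}\, \phi(M_{ik})$ for every pair $(i,j)$.

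Finally I would sum this inequality over all $i$ and $j$, interchange the order of summation, and collapse the weights using the second role of double stochasticity, namely that the \emph{rows} of $X$ sum to one, $\sum_j X_{kj} = 1$. This gives
\begin{equation}
H(Y) = \sum_{i,j} \phi(Y_{ij}) \geq \sum_{i,k} \phi(M_{ik}) \Big(\sum_j X_{kj}\Big) = \sum_{i,k} \phi(M_{ik}) = H(M),
\end{equation}
which is the claim $H(MX) \geq H(M)$. The main (and really only) subtlety is recognizing that both defining properties of a doubly-stochastic matrix are needed, and in opposite directions: the column-sum condition creates the convex combination that makes Jensen applicable, while the row-sum condition is what telescopes the mixing weights back to one after summation. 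Everything else is bookkeeping, and the $0{\cdot}\log(0)$ boundary case is handled automatically by the continuous concave extension of $\phi$.
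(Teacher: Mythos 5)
Your proof is correct, and it takes a genuinely different route from the paper. The paper invokes Birkhoff's theorem to write $X = \sum_i \alpha_i P_i$ as a convex combination of permutation matrices, then uses concavity of the matrix-valued total entropy $H(\cdot)$ over $[0,1]^{n\times n}$ together with the permutation-invariance $H(MP_i)=H(M)$ to conclude $H(MX)\geq \sum_i\alpha_i H(MP_i)=H(M)$. You instead work entrywise: each entry $Y_{ij}=\sum_k X_{kj}M_{ik}$ is a convex combination thanks to the column-sum condition on $X$, scalar Jensen applied to the concave $\phi(t)=-t\log t$ gives $\phi(Y_{ij})\geq\sum_k X_{kj}\phi(M_{ik})$, and the row-sum condition collapses the weights after summing over $j$. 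Both arguments are sound; yours is more elementary (only scalar Jensen, no Birkhoff) and slightly more general, since it never uses any property of $M$ beyond nonnegativity of its entries, and it makes explicit that the two marginal conditions on $X$ enter in distinct roles. What the paper's route buys in exchange is the immediate observation, noted at the end of its proof, that the inequality is tight when $X$ is a permutation matrix -- a fact that is less visible from the entrywise Jensen bound, though it can of course also be read off from your argument by tracking the equality case.
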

\begin{proof}
The doubly stochastic matrix $X$ can be expressed as a convex combination of permutation matrices (Birkhoff's theorem~\cite{birkhoff1946three}),~i.e.~$X = \sum_{i} \alpha_i P_i$, where $P_i$ is a permutation matrix and $\alpha_i \geq 0$ with $\sum_i \alpha_i = 1$. Moreover, the function $H(\cdot)$ is concave over the domain $[0,1]^{n \times n}$~\cite{cover1999elements}, so that  $H(MX) = H(M \sum_{i} \alpha_i P_i)  \geq \sum_i \alpha_i H(M P_i) = \sum_i \alpha_i H(M) = H(M)$, where the second-last equality follows since reordering the columns of the matrix $M$ by $P_i$ does not affect the value of $H(\cdot)$. \change{Note that for the specific case of $X$ being binary, the inequality is tight,~i.e.~$H(MX) = H(M)$.}
\end{proof}

\subsection{Probabilistic Multi-Matching Graph}
Next, we explain how we construct our probabilistic multi-matching graph for a given collection $\mathcal{S}$ of 3D shapes.
The $(n {\times} n)$-dimensional
doubly-stochastic matrices $M_{xy}$ represents probabilistic correspondences between shapes $x \in \mathcal{S}$ and $y \in \mathcal{S}$. \change{In contrast to commonly-used scalar edges, such matrix-valued edges constitute a powerful representation that can capture  fine-grained information about correspondences between individual points of shapes}. To obtain them, we first  compute SHOT features for all vertices of each 3D shape \cite{tombari2010unique}. 
Subsequently, in order to allow for shapes with a varying number of vertices, we  cluster all vertices of one shape into $n$ clusters 
using the k-means algorithm~\cite{lloyd1982least}, where we choose $n=28$ in all experiments.
Then,
we encode the SHOT feature distribution of all vertices within a cluster in terms of percentile statistics, which serves as a cluster-specific feature descriptor that summarises the characteristics of each cluster.
Doing so allows us to compute the similarity matrix between the clusters of two 3D shapes, for which we consider a Gaussian kernel applied to the $\ell_2$-norm of the difference between cluster feature descriptors, stored as $n \times n$ matrix.
In order to obtain the doubly-stochastic matrix $M_{xy}$, we use the
Sinkhorn matrix scaling algorithm \cite{sinkhorn1967concerning}. Thus, each entry in $M_{xy}$ can be interpreted as the probability to match a cluster of  shape $x$ to a cluster of  shape $y$. Overall, the probabilistic multi-matching graph $\mathcal{G} = (\mathcal{S}, \mathcal{E})$ is obtained by identifying each shape in $\mathcal{S}$ with a node, where all pairs of nodes are connected by an edge to which the respective probabilistic correspondence matrix is assigned.
Further details regarding the probabilistic multi-matching graph generation are presented in the Appendix.

\subsection{Correspondence-Free Shape Metric}

In this experiment we use our algorithm to define a metric on a given dataset $\mathcal{S}$ of 3D shapes. The distance  between the shape $x \in \mathcal{S}$ and the shape $y \in \mathcal{S}$ is defined as the total entropy of the shortest path between them,~i.e.~$H(M_{\pi_{xy}^\star})$, which we also write as $H(\pi_{xy}^\star)$ by abuse of notation. The following holds for $H$:
\begin{lemma}
For our probabilistic multi-matching graph $\mathcal{G} = (\mathcal{S}, \mathcal{E})$, the function $H$ fulfils to following properties:
\begin{enumerate}
    \item \textbf{Identity.} For all $x \in \mathcal{S}, H(\pi_{xx}^\star) = H(M_{xx}) = 0$. 
    \item \textbf{Non-negativity.} For all $x, y \in \mathcal{S}, H(\pi_{xy}^\star) \geq 0$.
    \item \textbf{Symmetry.} For all $ x,y \in \mathcal{S}$, $H(\pi_{xy}^\star)=H(\pi_{yx}^\star)$.
\item \textbf{Triangle Inequality (under $\circ$).} For all $x, y, z \in \mathcal{S}, H(\pi_{xz}^\star) \leq H(\pi_{xy}^\star \circ  \pi_{yz}^\star)$, where in this context the operation $\circ$ denotes the concatenation of two paths.
\end{enumerate}
\end{lemma}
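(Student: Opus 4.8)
The plan is to verify each of the four metric properties separately, leaning heavily on the earlier lemma stating that $H(MX) \geq H(M)$ for doubly-stochastic $M,X$. The first two properties are essentially immediate. For \textbf{Identity}, the shortest path from $x$ to itself is the trivial path with composed matrix $M_{xx}=I$, and since $H(I) = -\sum_i \sum_j I_{ij}\log(I_{ij}) = 0$ (each entry is either $0$ or $1$, and both contribute zero under the convention $0\cdot\log 0 = 0$ and $1\cdot\log 1 = 0$), we get $H(\pi_{xx}^\star)=0$. For \textbf{Non-negativity}, I would simply note that $H$ maps into $\mathbb{R}^+$ by definition, since each summand $-X_{ij}\log(X_{ij})$ is non-negative for $X_{ij}\in[0,1]$; hence any composed path matrix, being doubly-stochastic, has non-negative entropy.

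For \textbf{Symmetry}, the key structural fact I would exploit is that the graph is built symmetrically with $M_{yx} = M_{xy}^T$. I would argue that reversing a path $\pi_{xy} = (x,u_1,\ldots,u_k,y)$ yields the path $\pi_{yx} = (y,u_k,\ldots,u_1,x)$ whose composed matrix is the transpose of the original: using $(AB)^T = B^T A^T$ repeatedly, $M_{\pi_{yx}} = M_{yu_k}\circ\cdots\circ M_{u_1 x} = (M_{xu_1}\circ\cdots\circ M_{u_k y})^T = M_{\pi_{xy}}^T$. Since the entropy $H(X) = H(X^T)$ is invariant under transposition (the double sum over all entries is unchanged by swapping row and column indices), every $x$-to-$y$ path has an entropy-equal $y$-to-$x$ counterpart and vice versa, so the two minimisation problems have equal optimal values, giving $H(\pi_{xy}^\star) = H(\pi_{yx}^\star)$.

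For the \textbf{Triangle Inequality (under $\circ$)}, I would first clarify the meaning of the right-hand side: concatenating the optimal paths $\pi_{xy}^\star$ and $\pi_{yz}^\star$ produces a valid $x$-to-$z$ path whose composed matrix is $M_{\pi_{xy}^\star} \circ M_{\pi_{yz}^\star}$, so $H(\pi_{xy}^\star \circ \pi_{yz}^\star) = H(M_{\pi_{xy}^\star} \circ M_{\pi_{yz}^\star})$. Because $\pi_{xz}^\star$ is the \emph{shortest} $x$-to-$z$ path and the concatenation is merely one feasible candidate, optimality gives $H(\pi_{xz}^\star) \leq H(\pi_{xy}^\star \circ \pi_{yz}^\star)$ directly. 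I expect this to be the subtlest point, since the inequality here is not the classical triangle inequality $d(x,z)\leq d(x,y)+d(y,z)$ but rather one phrased under composition, and it hinges entirely on the fact that concatenating two paths is a valid path that the shortest-path optimum must beat. The main thing to get right is to state precisely that composition of path matrices corresponds to path concatenation (which follows from associativity of $\circ$ and the definition of the composed edge matrix), so that the feasibility argument applies cleanly.
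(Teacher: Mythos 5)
Your proposal is correct and follows essentially the same route as the paper on all four points: the trivial path and $H(I)=0$ for identity, non-negativity of each entropy summand, transpose-invariance of $H$ combined with $M_{yx}=M_{xy}^T$ for symmetry, and optimality of $\pi_{xz}^\star$ over the feasible concatenated path for the triangle inequality. Your phrasing of symmetry as a cost-preserving bijection between the two sets of paths is a slightly cleaner justification than the paper's direct assertion that the optimal path reverses, but it is the same underlying argument.
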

\begin{proof}
1. By definition, $M_{xx}$ is the identity matrix $I$, and $H(I) = 0$. Since there cannot be a shorter path with lower cost (due to 2.), the corresponding path $(x)$ constitutes a shortest path from $x$ to $x$.\\
2.  The total entropy  maps doubly stochastic matrices to non-negative numbers,  
so $H(M) \geq 0$ for any $M \in \mathcal{M}$.\\
3. The commutativity of summation implies that the total entropy is symmetric in the sense that $H(X) = H(X^T)$ for any $X \in \mathcal{M}$. Since by definition $M_{xy} = M_{yx}^T$ for all $x,y \in \mathcal{S}$, the shortest path from $x$ to $y$ must be the same as the shortest path from $y$ to $x$ in reverse order. As such,
$H(\pi_{xy}^\star) = H(M_{\pi_{xy}^\star}) = H(M_{xu_1}  M_{u_1u_2}  ... M_{u_k y}) = H(M_{u_k y}^T ...  M_{u_1u_2}^T  M_{xu_1}^T ) = H(M_{yu_k} ...  M_{u_2u_1}  M_{u_1x} ) = H(M_{\pi_{yx}^\star}) = H(\pi_{yx}^\star)$.\\
4. Since $H(\pi_{xz}^\star)$ is the shortest path between node $x$ and node $z$, any other path between $x$ and $z$ via $y$ cannot lead to a smaller cost.
\end{proof}

\paragraph{Results.}
First, we run our algorithm to determine the distance between all pairs of 3D shapes. Given these distances, we obtain the $k$ nearest neighbour shapes for a given query shape. 
In this experiment we consider the 41 toy animals of the SMAL dataset \cite{Zuffi:CVPR:2017}. %
The dataset is categorised into five animal families (cats, dogs, cows, horses, hippos). For our evaluation, we interpret a nearest neighbour shape that belongs to the same animal family like the query shape as correct. We compare our metric with two baselines for computing nearest neighbours:
\begin{itemize}
    \item The Euclidean distance (Eucl) between the vertices of shapes. In that case, vertex-to-vertex correspondences between the shapes need to be known. We emphasise that 
    finding correspondences between a shape collection~\cite{Bernard_2019_ICCV} is an extremely difficult problem, and thus this setting is rather unrealistic in many practical scenarios.
    \item Hence, we additionally consider the Euclidean distance after using the iterative closest point (ICP) algorithm \cite{besl1992method, chen1992object} for registering the randomly rotated shapes without known correspondences. This is a more realistic setting, since here we assume that we do neither know correspondences, nor the spatial alignment between shapes.
\end{itemize}
 
 \begin{figure}
    \centering
    \includegraphics[width=1\columnwidth]{./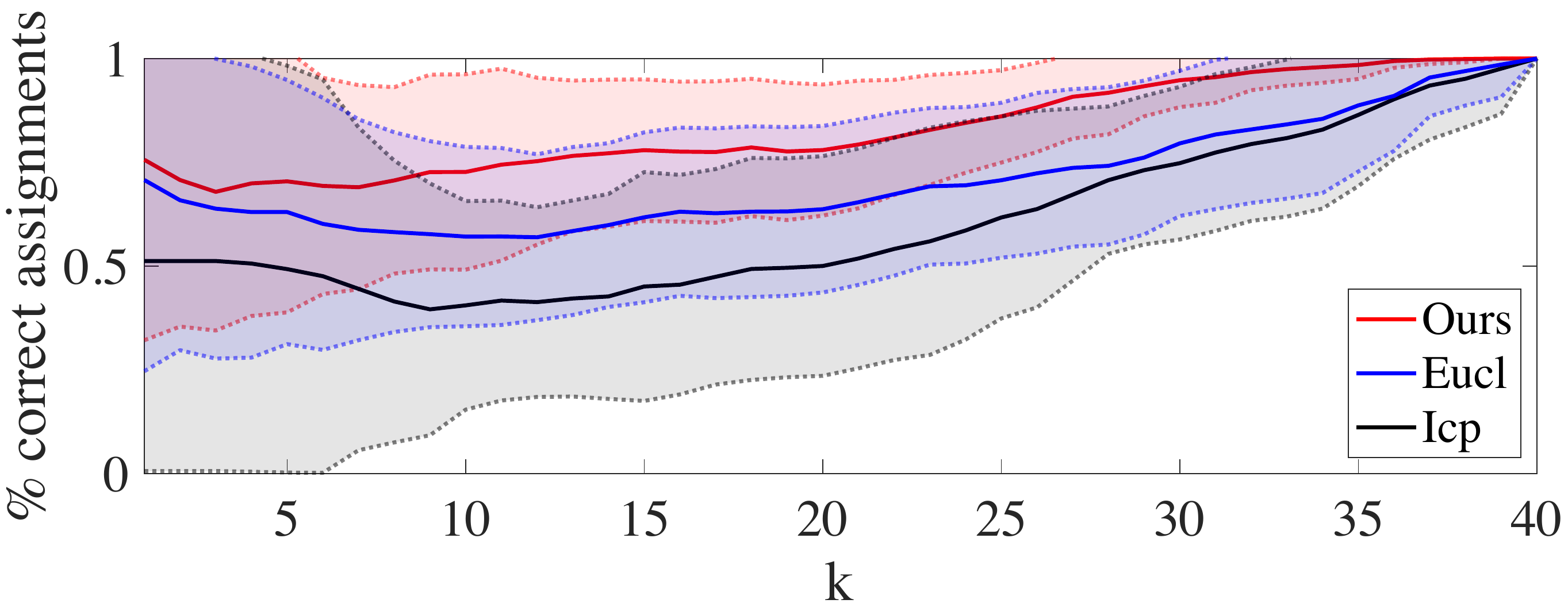}
    \vspace{-7mm}
    \caption{{Comparison of the \textbf{proportion of correct nearest neighbours} obtained by ICP, the Euclidean distance and our approach}. The number of nearest neighbours $k$ varies along the horizontal axis, and the vertical axis shows the mean (solid lines) and the standard deviation (filled area and dashed lines) of the $g_{ik}$ over all animals $i$ for a given $k$. 
    Our approach clearly outperforms the others.}
    \label{fig:shapeRetrievalQuantitativ}
\end{figure}
\newcommand{\figWidth}{0.205\linewidth}

\begin{figure*}[h!]
    \centering
    
         \begin{subfigure}[b]{\figWidth}
    \includegraphics[width= 1\linewidth] {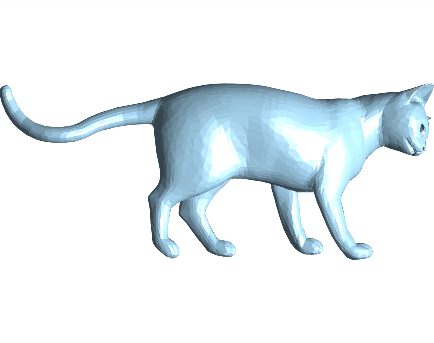}
\end{subfigure}\hfill
\begin{subfigure}[b]{\figWidth}
    \includegraphics[width= 1\linewidth] {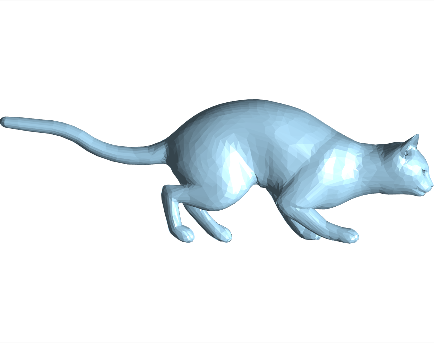}

\end{subfigure}\hfill
\begin{subfigure}[b]{\figWidth}
    \includegraphics[width= 1\linewidth] {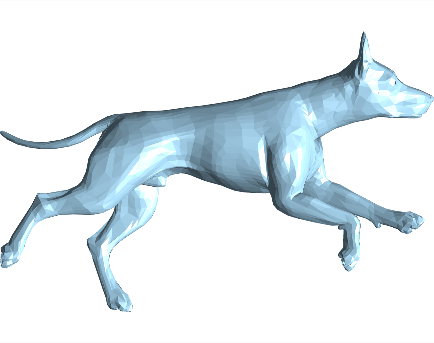}
\end{subfigure}\hfill
\begin{subfigure}[b]{\figWidth}
    \includegraphics[width= 1\linewidth] {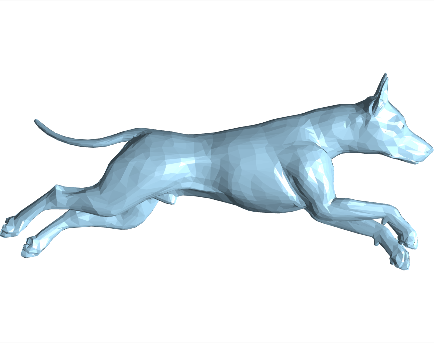}
\end{subfigure}

\vspace{-3mm}

        \begin{subfigure}[b]{\figWidth}
    \includegraphics[width= 1\linewidth] {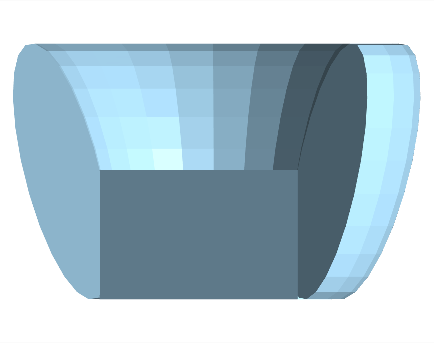}
\end{subfigure}\hfill
\begin{subfigure}[b]{\figWidth}
    \includegraphics[width= 1\linewidth] {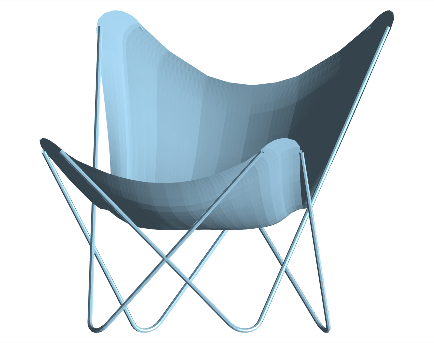}

\end{subfigure}\hfill
\begin{subfigure}[b]{\figWidth}
    \includegraphics[width= 1\linewidth] {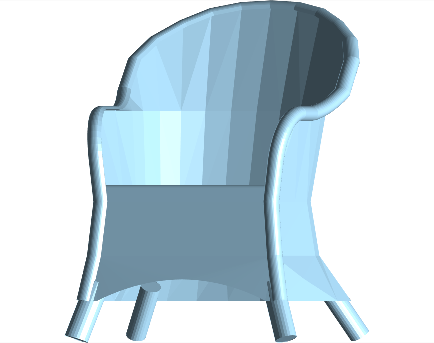}
\end{subfigure}\hfill
\begin{subfigure}[b]{\figWidth}
    \includegraphics[width= 1\linewidth] {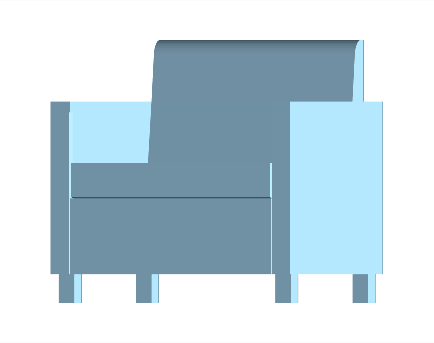}
\end{subfigure}

\vspace{-2mm}
    \caption{\textbf{Shortest paths comprising $k{=}3$ edges.}
    We find reasonable intermediate shapes (second and third column), between the source (left) and the target shape (right),~i.e.~between a standing cat and a running dog (top row, Non-Rigid World dataset~\cite{bronstein2006efficient}), as well as between a rounded armchair and a square armchair (bottom row, ShapeNet~\cite{shapenet2015}).
    }
        \label{fig:sp_k_3}
\end{figure*}
 In Fig.~\ref{shapeRetrievalQualitativ} we show the three closest shapes of four different animals. It can be seen that our algorithm leads to the best results,~i.e.~it finds the most neighbour shapes from the same animal family.
To quantitatively evaluate the performance of the three methods, we consider he proportion of correct nearest neighbours for the $i$-th animal and a given number of neighbours $k$, denoted as $g_{ik} = \frac{\text{TP}(i,k)}{\text{min}(m_i, k)}$.
The value of $g_{ik}$ is evaluated for each of the three methods, where $\text{TP}(i,k)$ denotes  the total number of true positives of the $i$-th animal among the $k$ nearest neighbour shapes obtained by the respective method.
 Further, 
 $m_i$ is the number of animals that belong to the category of the $i$-th animal, and $k$ denotes the number of nearest neighbours that are considered. 
 Fig.~\ref{fig:shapeRetrievalQuantitativ} shows that our method significantly outperforms both the Euclidean distance and ICP.

\subsection{Intermediate Shapes} 
\newcommand{\includeTosca}[1]{
        \includegraphics[width= 0.28\linewidth]{figures/intermediate_shapes_tosca/shape_tosca_mesh_#1.png}
}
\newcommand{\includeToscaRow}[1]{
\includeTosca{#1_1} &\includeTosca{#1_2} &\includeTosca{#1_3}
}

\begin{figure}[htb]
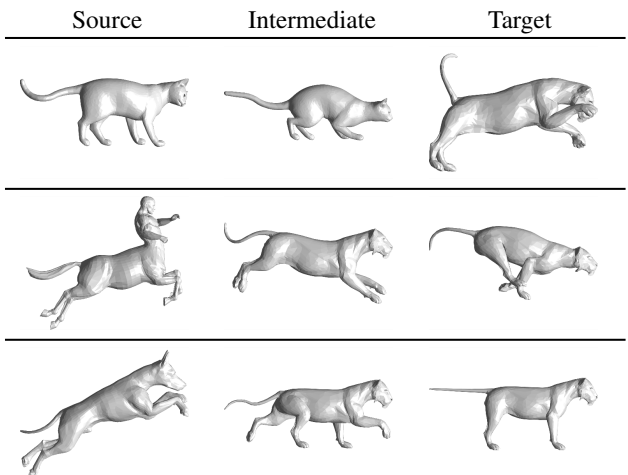

    \centering
    \resizebox{\linewidth}{!}{%
        \begin{tabular}{c c c}
             Source &  Intermediate & Target\\
             \toprule
            \includeToscaRow{1} \\
            \toprule
            \includeToscaRow{2} \\
            \toprule
            \includeToscaRow{3} 
        \end{tabular}
    }
        \vspace{-5mm}
    \caption{\textbf{Intermediate shape between two different shapes.} By finding the shortest path from a source to a target shape, our algorithm is able to find a reasonable intermediate shape between a standing cat and an attacking %\fb{are both cats?}
    lion, a jumping centaur and a running lion, as well as a jumping dog and a standing lion (Non-Rigid World dataset~\cite{bronstein2006efficient}). }
    \label{fig:intermediateTosca}
\end{figure}

\newcommand{\includeChair}[3]{\begin{subfigure}[b]{0.3\linewidth} \includegraphics[width= 0.95\linewidth]{figures/chair_#3/shape_chair_mesh_#1_#2.png}
\end{subfigure}}

\newcommand{\includeChairUp}[1]{\begin{subfigure}[b]{0.45\linewidth} \includegraphics[width= 0.9\linewidth]{figures/chair_upside/chair_#1_bad_up.png}
\end{subfigure}}

\newcommand{\rulesep}{\unskip\ \vrule\ }

\newcommand{\includeTransitionChair}[2]{\includeChair{#1}{1}{#2} \includeChair{#1}{2}{#2} \includeChair{#1}{3}{#2}}

\begin{figure*}
    \centering
    \begin{subfigure}[b]{0.45\linewidth}
        \includeTransitionChair{1}{good}
        \includeTransitionChair{2}{good}
        \includeTransitionChair{3}{good}
        \end{subfigure}
        \rulesep
\begin{subfigure}[b]{0.45\linewidth}
        \includeTransitionChair{4}{good}
        \includeTransitionChair{5}{good}
        \includeTransitionChair{6}{good}
         \end{subfigure}
         \vspace{-2mm}
    \caption{\textbf{Intermediate shapes for different topologies.} Our method can find reasonable intermediate shapes between various types of chairs from  ShapeNet~\cite{shapenet2015}, even if their topology varies drastically. }
    \label{fig:goodChairs}
\end{figure*}
\begin{figure*}[h!]
    \centering
    \includegraphics[width= 1\linewidth]{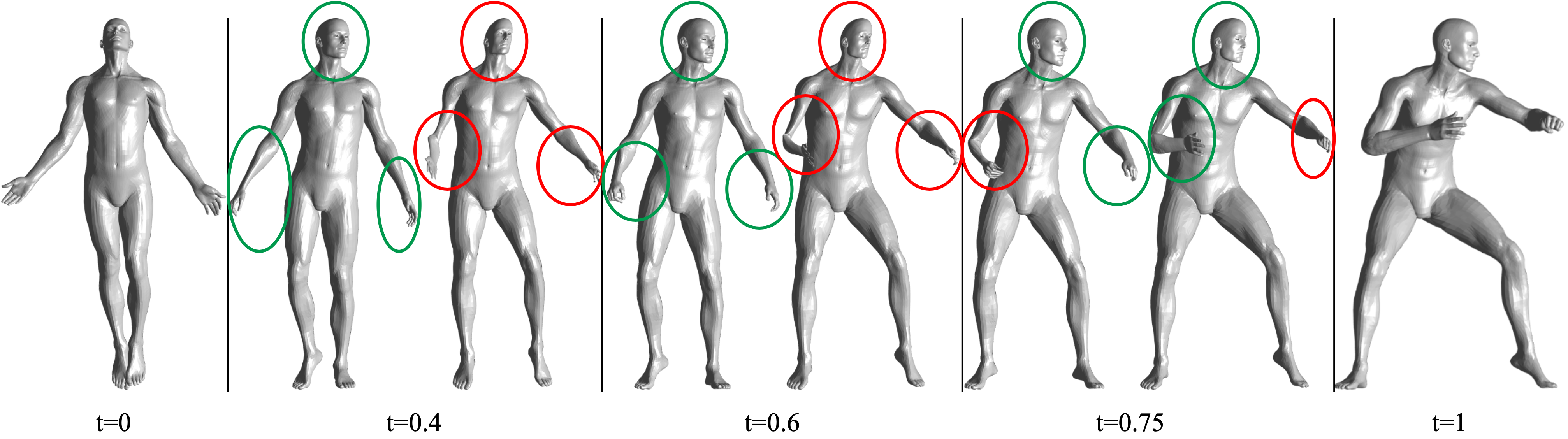}
    \vspace{-6mm}
    \caption{\textbf{Morphing the `Michael' shape from the  TOSCA dataset~\cite{bronstein2008numerical}.} Frames of the shape morphing process at t=0 (source), t=0.4, t=0.6, t=0.75 and t=1 (target) are shown. 
    For each time step, the left shape shows our morphing, whereas the right shape shows the morphing via naive linear interpolation. Overall, our morphing leads to significantly fewer artefacts.}
    \label{fig:shapeMorph}
\end{figure*}
In this experiment we consider the task of finding intermediate shapes between two given shapes. This  is closely related to shape interpolation~\cite{kilian2007geometric, heeren2012time, eisenberger2019divergence, eisenberger2020hamiltonian}, which is typically based on the assumption that there exists a smooth deformation between two given shapes.
The difference between an intermediate shape and shape interpolation is 
that intermediate shapes must be part of the shape collection. Hence, while interpolation can lead to implausible shapes, this cannot happen for intermediate shapes. %
Moreover, finding  intermediate shapes within our shortest path setting does not require vertex-to-vertex correspondences between shapes, which is often necessary for interpolation methods that rely on explicit shape representations.

\change{
We consider two settings for this experiment: }
(i) In the first setting, we find the shortest path between two given shapes that has a fixed number of edges $k$. By doing so, we can exactly specify the number of desired intermediate shapes, so that this setting is relevant whenever we are interested in a fixed number of intermediate shapes. Results for shortest paths among all paths with a fixed number of edges are shown in Fig.~\ref{fig:sp_k_3}.
(ii) In the second setting, we consider the shortest path among all possible paths, rather than paths with a fixed number of edges. Here, the number of obtained intermediate shapes is directly determined by the dataset. As such, there is also the possibility that the direct path leads to the lowest cost, and thus there may not always exist intermediate shapes between a pair of shapes. Respective results are shown in Figs.~\ref{fig:intermediateTosca} and~\ref{fig:goodChairs}. We can even find reasonable intermediate shapes when 
the topologies between shapes vary drastically, as demonstrated in Fig.~\ref{fig:goodChairs}.

\subsection{Shape Morphing}
A use case for intermediate shapes between two shapes is shape morphing. To do so, given a source and a target shape, we first find all intermediate shapes along the shortest path between the source and the target. Subsequently, we perform a piecewise linear interpolation between pairs of shapes along this shortest path. In contrast, a naive linear interpolation between the source and target leads to severe artefacts, which are particularly prominent if both shapes differ significantly. 
We illustrate results in Fig.~\ref{fig:shapeMorph}  for morphing two human body shapes with large poses differences. Animated results can be found in the Appendix. \change{Although more involved interpolation methods could also be used (both in conjunction with our method, and for  direct source-to-target morphing), here we use a simple linear interpolation to highlight potential problems due to a direct morphing, which is effectively circumvented using our method.}

\section{Discussion \& Future Work}
While we believe that our work offers a strong potential %
for numerous applications, even beyond multi-shape analysis, interesting open research questions are remaining. %

\begin{table}[]
    \centering
    \resizebox{\linewidth}{!}{%
       \begin{tabular}{lccc}
       \toprule
        \textbf{Dataset} & \textbf{\#Shapes} &
        \textbf{Time (SP) [s]}
        &\textbf{Time (CERT) [s]} \\
        \midrule
        SMAL & 41 & 1.6 & 154 \\
        ShapeNet Chairs & 51 & 3.8& 35.2 \\
        TOSCA (`Michael') & 20 & 0.33 & 1.6 \\
        Non-Rigid World & 78 & 147.45$^{\dagger}$ & 147.45$^{\dagger}$ \\
        \bottomrule 
        \end{tabular}
    }
        \vspace{-3mm}
    \caption{Runtimes of our algorithm for finding the globally optimal shortest paths between all node pairs (SP), and for certifying their global optimality (CERT).}
    \label{tab:runtime}
\end{table}
\textbf{Computational cost.} Our proposed shortest path problem does (in general) not exhibit the optimal substructure property,~i.e.~a subpath of a shortest path may not necessarily be a shortest path. As such, generalising existing shortest paths paradigms (e.g.~Dijkstra) to our setting would generally not guarantee the global optimality of so-obtained `shortest' paths. While Alg.~\ref{alg:shortestpath}  finds globally optimal shortest paths, it has an exponential worst-case time complexity. For moderately-sized graphs (${\leq}51$ nodes), we can find shortest paths between all nodes within seconds, and we certify the global optimality within minutes (see Table~\ref{tab:runtime}). $^{\dagger}$~For the larger Non-Rigid World dataset~\cite{bronstein2006efficient}, where we included quadrupeds, the centaur and the Michael shape, we limit the maximal number of edges in a path to $k_{\text{max}} {=} 3$.

\textbf{Edge matrix construction.}  In our experiments we  considered edge matrices obtained via  percentile statistics summarising the SHOT feature distribution of vertex clusters. While we demonstrated various intriguing results,
we found that e.g.~inconsistent clusters may cause problems (see Appendix). Yet, this is neither a limitation of our shortest path formalism, nor of our  algorithm, but rather stems from the specific choice of edge matrices.
The exploration of improved graph construction methods is an interesting direction for future work. One way may be the integration of our method into end-to-end trainable deep neural networks,~e.g.~via differentiable programming, in order to learn task-specific optimal edge matrices directly from data.
\change{Additionally we expect that our method
will inspire follow-up works that consider shortest paths for
edges with even more elaborate attributes than matrices.}

\section{Conclusion}
For the first time we have conceptualised the shortest path problem for graphs with matrix-valued edges. To this end, we introduced a generic modelling framework, along with a globally optimal search strategy for finding respective shortest paths. As a proof of concept, we studied matrix-valued shortest paths in the context of multi-shape analysis.
We believe that our formalism has a strong potential for numerous other applications in vision, graphics, and beyond.
Overall, we hope that our work will spark interest in developing more involved algorithmic machinery that is better capable of modelling and analysing the complex interrelations that occur in real-world visual computing problems.

{\small
\bibliographystyle{ieee_fullname}
\bibliography{egbib}
}

\clearpage 
\appendix

\section{Appendix}
In this document we present additional results and an in-depth explanation of our graph generation.

\subsection{Additional Results}
\textbf{Shape morphing.}
We use the two settings ('Michael' and cat) from the TOSCA dataset \cite{bronstein2008numerical} to show exemplary shape morphing results using an intermediate shape found using our matrix-valued shortest path paradigm. In Fig.~\ref{fig:morphing}, we show the intermediate shapes between the source and target shape that were used to compute our morphing. 
For the 'Michael' shape (see Fig.~\ref{fig:shapeMorph}, as well as the animation in Fig.~\ref{fig:shape_morphing_animation}), we place the intermediate shape at t=0.5. In this case, we can observe a smooth morphing result.  

For the cat (see Fig.~\ref{fig:cat_morphing}, as well as the animation in Fig.~\ref{fig:shape_morphing_animation}), the intermediate shape is placed at t=0.8, because its pose lies much closer to the target shape than to the source shape. 
In this case,
there is no intermediate shape in the dataset that fits to t=0.5, so that particularly during the first part of the morphing we can observe some minor artefacts using our method, whereas naive linear interpolation leads to more drastic artefacts (see the gif animation). This emphasises that shape morphing is highly data-dependent, in the sense that the denser the different pose changes are covered in a dataset, the better the morphing results.
\newcommand{\includeMichael}[1]{
        \includegraphics[height= 0.42\linewidth]{figures/morphing/michael_#1.png}
}

\newcommand{\includeCat}[2]{
        \includegraphics[width= #2\linewidth]{figures/morphing/cat_#1.png}
}
\begin{figure}[h!]
    \centering
\begin{tabular}{c c c}
             Source &  Intermediate & Target\\
             \toprule
             \includeMichael{start} & \includeMichael{interm}  & \includeMichael{end} \\
            \toprule \includeCat{start}{0.15} & \includeCat{interm}{0.28}  & \includeCat{end}{0.28}
             
\end{tabular}
    \caption{\textbf{Intermediate shapes used for our shape morphing.} We use two example shapes of the TOSCA dataset \cite{bronstein2008numerical} (`Michael' and cat) to morph the source (left) over the intermediate (middle) to the target shape (right).}
    \label{fig:morphing}
\end{figure}
 
\begin{figure*}[h]
    \centering
    \includegraphics[width= 1\linewidth]{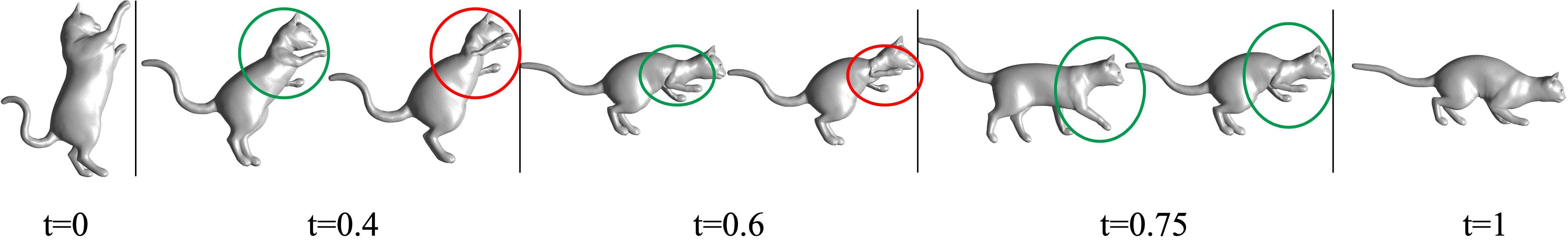}
    \caption{\textbf{Morphing the cat shape from the  TOSCA dataset~\cite{bronstein2008numerical}.} Frames of the shape morphing process at t=0 (source), t=0.4, t=0.6, t=0.75 and t=1 (target) are shown. For each time step, the left shape shows our morphing, whereas the right shape shows the morphing via naive linear interpolation. We can see that our morphing gives more plausible results compared to naive linear interpolation.}
    \label{fig:cat_morphing}
\end{figure*}

\begin{figure*}
    \centering
    \begin{tabular}{cc}
    \animategraphics[autoplay,loop, width=.48\linewidth]{50}{animations/michael/tosca_michael_new4-}{1}{100} &  \animategraphics[autoplay,loop, width=.48\linewidth]{50}{animations/cat/tosca_cat1-}{1}{100}
    \end{tabular}
    \caption{Animation of shape morphing of the 'Michael' and cat shapes from the TOSCA dataset~\cite{bronstein2008numerical}. \\
    (supported PDF reader required to play animation)}
    \label{fig:shape_morphing_animation}
\end{figure*}

\begin{figure}
    \centering
        \includeTransitionChair{bad}{bad}
        \includeChairUp{1}
        \includeChairUp{2}
    \caption{\textbf{Inconsistent clusters lead to unreliable intermediate shapes.} Using the chairs of ShapeNet~\cite{shapenet2015}, some obtained intermediate shapes are not plausible (top row). This is not a limitation of our method per-se, but stems from inconsistent clusters (bottom row).}
    \label{fig:badChairs}
\end{figure}
\begin{figure}[ht]
    \centering
 \includegraphics[width= 1\linewidth]{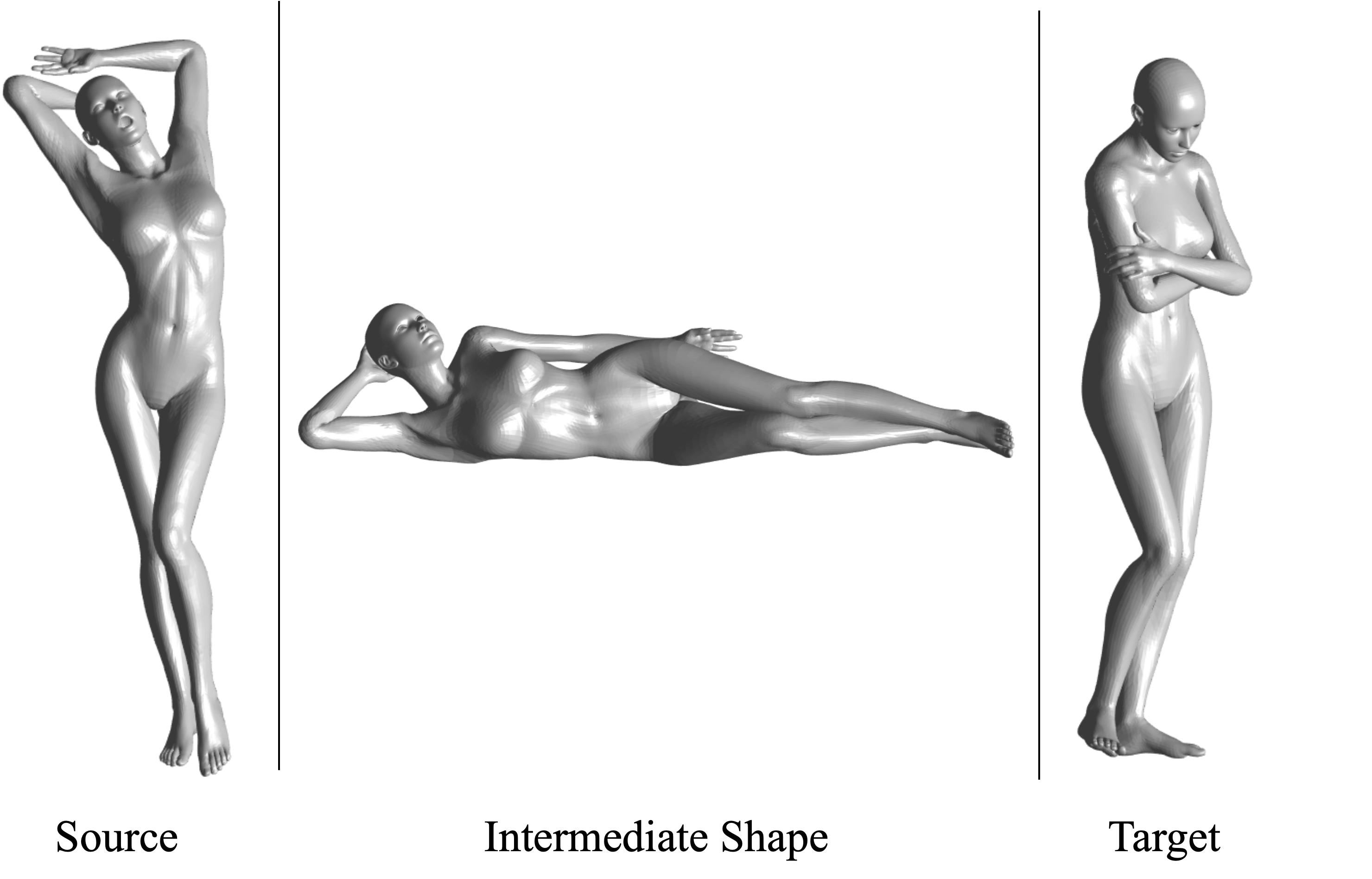}
    \caption{\textbf{Rotation invariance may lead to undesirable  intermediate shapes.} The intermediate shape of two standing 'Victoria' shapes of the TOSCA dataset  \cite{bronstein2008numerical} results in a lying shape that is unsuitable for shape morphing. This can be avoided by considering rotation-variant features for constructing the edge matrices.} 
    \label{fig:failure_victoria}
\end{figure}

\textbf{Failure cases.}
Our edge matrices are obtained in terms of the similarity between  clusters extracted from two 3D shapes. While in many cases doing so leads to  sensible results,
there are also some failure cases. We illustrate one such case in Fig.~\ref{fig:badChairs}, where the found clusters  are inconsistent. In turn, the respective edge matrices are not very meaningful, so that an implausible intermediate shape is obtained. We emphasise that this is neither a limitation of our shortest path formalism, nor of our shortest path algorithm, but rather due to a suboptimal feature engineering.

Our edge matrices inherit rotation-invariance from the SHOT features.
Although rotation-invariance is a desirable property in many cases, in the context of shape morphing it may lead to undesirable results. For example,
the intermediate shape between two standing 'Victoria' shapes of the TOSCA dataset leads to a shape in lying position, as shown in Fig.~\ref{fig:failure_victoria} -- although in this case the intrinsic shape properties are indeed `intermediate', when considering the extrinsic shape,~e.g.~as it would be used for shape morphing, the results appear unnatural. Again, this is not a limitation of our method, but rather due to the way how the edge matrices are constructed.

\subsection{Details on Graph Generation}
\begin{figure*}[t!]
\includegraphics[width=1\linewidth]{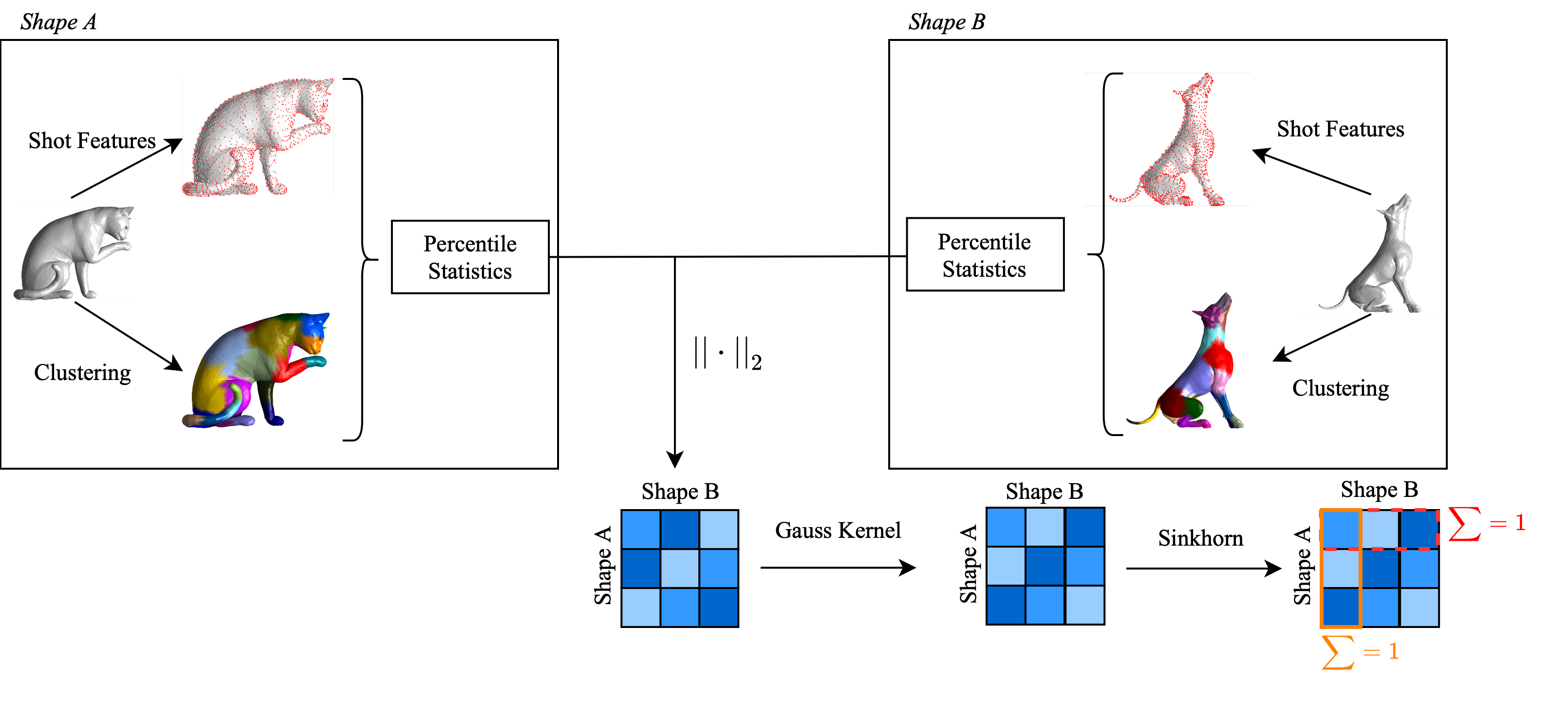}
\caption{\textbf{Probabilistic multi-matching graph construction.} First, we compute SHOT features on the 3D shapes \cite{tombari2010unique}, and perform a k-means clustering \cite{lloyd1982least}. We then summarise the distribution of the SHOT features in the clusters in terms of percentile statistics %
. We use the $\ell_2$-norm to compute differences between the per-cluster percentile statistics, so that we obtain a distance matrix of size $n \times n$.  By applying a Gaussian kernel, distances are translated into similarity scores. Eventually,  the Sinkhorn matrix scaling algorithm \cite{sinkhorn1967concerning}  is used to obtain doubly-stochastic matrices that can be interpreted as probabilities of matching pairs of clusters between shapes.
}
\label{fig:dataPreprocessing}
\end{figure*}
 
Next, we explain how we construct our probabilistic multi-matching graph for a given collection $\mathcal{S}$ of 3D shapes, where each shape in $\mathcal{S}$ represents a graph node. For each pair of shapes we compute a probabilistic correspondence matrix of size $n \times n$.
To obtain them, we first  compute SHOT features for all vertices of each 3D shape \cite{tombari2010unique}. 
Subsequently, in order to allow for shapes with varying number of vertices, we  cluster all vertices of one shape into $n$ clusters 
using the k-means algorithm~\cite{lloyd1982least}, where we choose $n=28$ in all experiments. 
Then,
we describe the SHOT feature distribution of all vertices within a cluster in terms of percentile statistics, which then serves as a cluster-specific feature descriptor that summarises the characteristics of each cluster. 
For each cluster $i$, and each of the shapes $x$ and $y$,  we obtain the percentile statistics matrices $P^{(x)}_i$ and $P^{(y)}_i$  of size $p \times f$, where $p$ is the number of considered percentiles and $f=352$ is the dimension of the SHOT feature descriptor.
For every pair of clusters between shape $x$ and $y$, we consider the Frobenius norm of the difference between the per-cluster percentile statistics. To this end, we define the distance between the $i$-th cluster of shape $x$ and the $j$-th cluster of shape $y$ as
\begin{equation}
    d_{ij} = \lVert P^{(x)}_i - P^{(y)}_j \lVert_F.
    \label{percentileCal}
\end{equation}
In order to transform the distances $d_{ij}$ to similarity scores, we apply a Gaussian kernel,~i.e.~
\begin{equation}
    m_{ij} = \exp(-\frac{d_{ij}^2}{\sigma^2}).
    \label{gauss}
\end{equation}
Eventually,
we use the Sinkhorn matrix scaling algorithm \cite{sinkhorn1967concerning} in order to obtain the doubly-stochastic matrix $M_{xy}$.
Here, each entry at position $(i,j)$ of the matrix $M_{xy}$ can be interpreted as the probability to match the $i$-th cluster of shape $x$ to the $j$-th cluster of shape $y$. The choice of the parameters $p$ and $\sigma$ is shown in Table~\ref{table:params}, and the entire process of the graph generation is outlined in Fig.~\ref{fig:dataPreprocessing}.
\begin{table}[]
\centering
\begin{tabular}{ccc}
    \toprule
    \textbf{Dataset} & \#percentiles $p$ & standard deviation $\sigma$\\
    \midrule
     Non-Rigid World & 300 & 2\\
     TOSCA 'Michael' & 3000 & 4.5\\
     TOSCA 'Victoria' & 150 & 0.7\\
     TOSCA Cat & 150 & 1\\
     SMAL & 300 & 3\\
     ShapeNet Chairs & 1000 & 2\\
     \bottomrule
\end{tabular}
\caption{Overview of chosen parameters.}\label{table:params}
\end{table}

\end{document}